\newtheorem{theorem}{Theorem}[section]
\newtheorem{lemma}[theorem]{Lemma}
\title{A boosted outlier detection method based on the spectrum of the Laplacian matrix of a graph}
\author{
  Nicolas Cofre \\
%    \texttt{GTID 903459904}\\
%    \texttt{Team ID 99 }\\
  \texttt{nicolas.cofre@gatech.edu}\\
  %% examples of more authors
  %% \AND
  %% Coauthor \\
  %% Affiliation \\
  %% Address \\
  %% \texttt{email} \\
  %% \And
  %% Coauthor \\
  %% Affiliation \\
  %% Address \\
  %% \texttt{email} \\
  %% \And
  %% Coauthor \\
  %% Affiliation \\
  %% Address \\
  %% \texttt{email} \\
}
\begin{document}
\maketitle

\begin{abstract}
This paper explores a new outlier detection algorithm based on the spectrum of the Laplacian matrix of a graph. Taking advantage of boosting together with sparse-data based learners. The sparcity of the Laplacian matrix significantly decreases the computational burden, enabling a spectrum based outlier detection method to be applied to larger datasets compared to spectral clustering. The method is competitive on synthetic datasets with commonly used outlier detection algorithms like Isolation Forest and Local Outlier Factor.

\end{abstract}

% keywords can be removed
\keywords{Outlier \and spectral clustering \and boosting}

\section{Introduction}

Spectral clustering ability to discover non-convex clustering structures has been well researched, \cite{von2007tutorial} offers a survey on spectral clustering. Its usage as an outlier detection method based on connectivity has been explored in works like \cite{10.1145/3175684.3175716}. 

In this paper, we explore a new algorithm that exploits boosting and the sparcity of the Laplacian matrix to speed up the computation of the eigenvectors and apply a family of learners based on the spectrum to the outlier detection task.

We show that the performance is comparable or superior to well known outlier detection methods such as Local Outlier Factor from \cite{breunig2000lof} and Isolation Forest from \cite{liu2012isolation}. LOF and Isolation Forest are two widely applied outlier detection algorithms, as stated in \cite{10.1145/3338840.3355641}. Here we show as well how LOF can fail in complex data structures given its main focus on local relations, and how Isolation Forest misses other local structures given its focus in global outliers.

\cite{schapire2003boosting} offers a review on boosting methods, one main difference between our boosted application with respect to other boosting methods like Adaboost is that a proportion of the dataset is removed and the remaining data is passed on to the next learner, progressively decreasing the computational effort needed.

\section{The boosted algorithm}

In this section we present the boosted method, called Boosted Spectral Outlier Detection (BSOD) algorithm. Because boosting methods progressively focus on harder to classify instances, and given that outliers generally represent a very small subset of the whole dataset, we can use a boosting approach to progressively focus on distinctive observations in the dataset.

The $\varepsilon$-neighborhood graph $W$ is defined as the graph in which points within a distance $\varepsilon$ from each other are connected. We have used the Euclidean distance to construct this graph.

The Laplacian matrix $L$ is defined as 
$$L= D-W$$ Where $D$ is the diagonal matrix of degrees i.e. each element in the diagonal is $$d_i = \sum_j w_{ij}$$

$\lambda_k$ corresponds to the $k$-th largest eigenvalue of $L$ and $v_k$ is the associated eigenvector.

\begin{algorithm}[H]
\SetAlgoLined
\KwResult{set of outliers $X^*$ }
\SetKwInOut{Input}{input}
\Input{ $X_0=\{x_i\}_{i=1}^n$, $0<c<1$, $\varepsilon>0$}
 \While{$\#X_i >nc$}{
  $X_i \leftarrow (X_i-\bar{X_i})/s_{X_i}$
  
  $W \leftarrow \varepsilon$-neighborhood graph
  
  $L \leftarrow D-W$
  
  $v_n^i \leftarrow |v^i_n|$
  
  $C_1, C_2 \leftarrow $ k-means on $v_n^i$ with $k=2$
  
  $X_i = \{x_i \in C_j | \#C_j=\max\{\#C_1,\#C_2\}$
%\}$
%
}
\caption{Boosted spectral outlier detection algorithm, BSOD}
\end{algorithm}

\begin{lemma}
Given a contamination parameter $c<1$ and observations $\{x_i\}_{i=1}^n \subset X$, the boosted algorithm requires a finite number of weak learners to reach $c$
\end{lemma}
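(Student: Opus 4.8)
The plan is to track the cardinality $n_i := \#X_i$ of the set handed to the $i$-th weak learner and show it loses at least one observation on every pass of the \texttt{while} loop, so that the guard $\#X_i > nc$ must fail after finitely many learners. Two preliminary remarks: $n_0 = n$; and since the guard is evaluated before the loop body, a learner runs only when $n_i > nc$, so — assuming $nc \ge 1$, which holds for any genuine contamination level on a finite sample — the $2$-means step always operates on at least two points.

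The structural fact I would isolate is that at the end of each pass $X_i$ is overwritten by the larger of the two clusters $C_1,C_2$ produced by $k$-means with $k=2$, and $\{C_1,C_2\}$ partitions the current set, so $\#C_1+\#C_2=n_i$. Therefore
$$n_{i+1} \;=\; \max\{\#C_1,\#C_2\} \;=\; n_i-\min\{\#C_1,\#C_2\},$$
which shows $(n_i)$ is non-increasing, and strictly decreasing — in fact $n_{i+1}\le n_i-1$ — precisely when the discarded (smaller) cluster is non-empty. Granting that, the lemma is immediate: a strictly decreasing sequence of positive integers starting at $n$ enters $\{m:\ m\le nc\}$ after at most $\lceil n(1-c)\rceil$ steps, and the loop exits there. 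So finitely many learners suffice, with an explicit bound.

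The one delicate point — and the main obstacle — is guaranteeing $\min\{\#C_1,\#C_2\}\ge 1$, i.e. that $2$-means on $|v_n^i|$ genuinely returns two non-empty parts. This can fail only in degenerate configurations, e.g. when $|v_n^i|$ is constant (if the $\varepsilon$-neighbourhood graph is connected and $v_n^i$ is the constant null eigenvector of $L$, or if $L=0$ because no two points lie within $\varepsilon$), in which case the assignment is ill-posed and a naive routine could return $n_{i+1}=n_i$. I would close this gap either by (i) invoking the standard convention that Lloyd's algorithm with $k=2$ outputs a proper bipartition (re-seeding an orphaned centroid), or, more principled, by (ii) restricting to the regime natural for outlier detection in which the $\varepsilon$-graph is disconnected at each executed iteration, so the eigenspace of $\lambda=0$ has dimension at least two and $v_n^i$ can be chosen non-constant in modulus, forcing each centroid to own at least one coordinate. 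Under either reading every executed learner strictly decreases $n_i$, and finiteness follows from the integer argument above.
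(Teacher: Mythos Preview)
Your argument is correct and follows essentially the same route as the paper: both observe that keeping only one of the two $k$-means clusters forces $\#X_i$ to strictly decrease, so the integer sequence must drop below $nc$ in finitely many steps. Your version is in fact more careful than the paper's---you extract the explicit bound $\lceil n(1-c)\rceil$ and you flag and address the degenerate case (both clusters non-empty) that the paper's one-line proof simply asserts via $\#X_i<\#X_{i-1}$ without justification.
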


\begin{proof}
Given that k-means with $k=2$ is used in the transformed space and that the next learner focuses in only one of those clusters, we have that $X_i \subset X_{i-1}$. Therefore, $\forall i, \#X_i < \#X_{i-1}$   $\Rightarrow \exists i^*, \#X_{i} < nc, \forall i > i^*$.
\end{proof}

\section{Datasets and experiments}

We have tested our algorithm in 2 synthetic datasets. In order to compare the performance, we also test the perfomance on the same datasets for Local Outlier Factor and Isolation Forest. We have used 20 as the number of neighbors for Local Outlier Factor, the default parameter in the Scikit-learn implementation. $100$ trees for Isolation Forest, which is also the default parameter in the Scikit-learn implementation. Finally, $\varepsilon = 0.5$ for all the experiments in BSOD. Each method was given the real contamination value, and the values of precision and recall were recorded for different levels of contamination.

The synthetic dataset 1, corresponds to a circle corresponding to inliners and uniformly random noise as outliers. We have used 4 different levels of contamination. The number of inliners used was 10,000 and a number of outliers was added in order to reach each contamination level. The synthetic dataset 1 is shown below for a contamination level of $10\%$.

\begin{figure}[H]
\caption{Synthetic dataset 1 and real labels}
\centering
\includegraphics[width=0.5\textwidth]{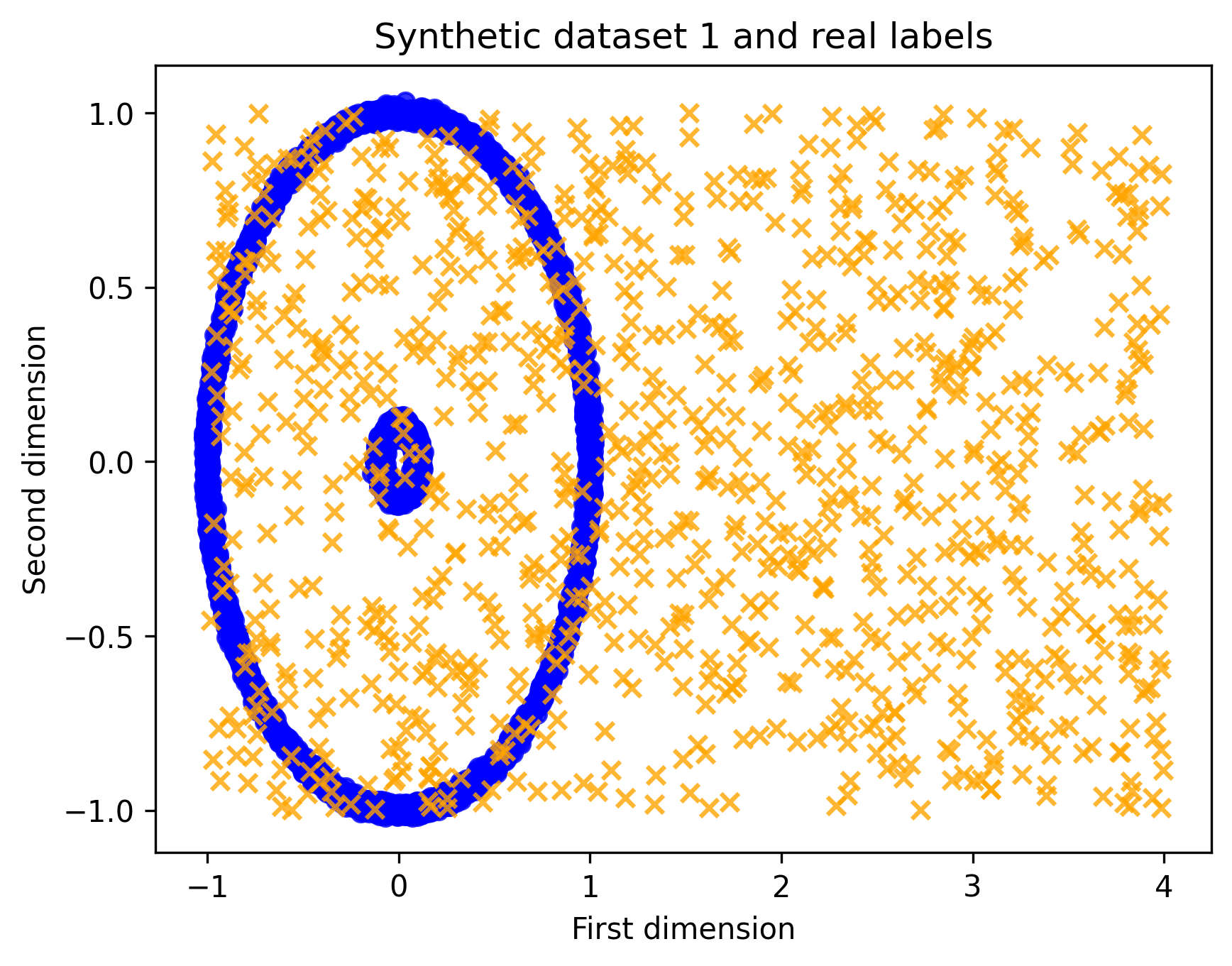}
\end{figure}

The synthetic dataset 2 includes two moons corresponding to inliners and uniformly random noise as outliers. As for the synthetic dataset 1, we have also used 4 different levels of contamination. The number of inliners used was 10,000 and a number of outliers was added in order to reach each contamination level. The synthetic dataset 2 is shown below for a contamination level of $10\%$.

\begin{figure}[H]
\caption{Synthetic dataset 2 and real labels}
\centering
\includegraphics[width=0.5\textwidth]{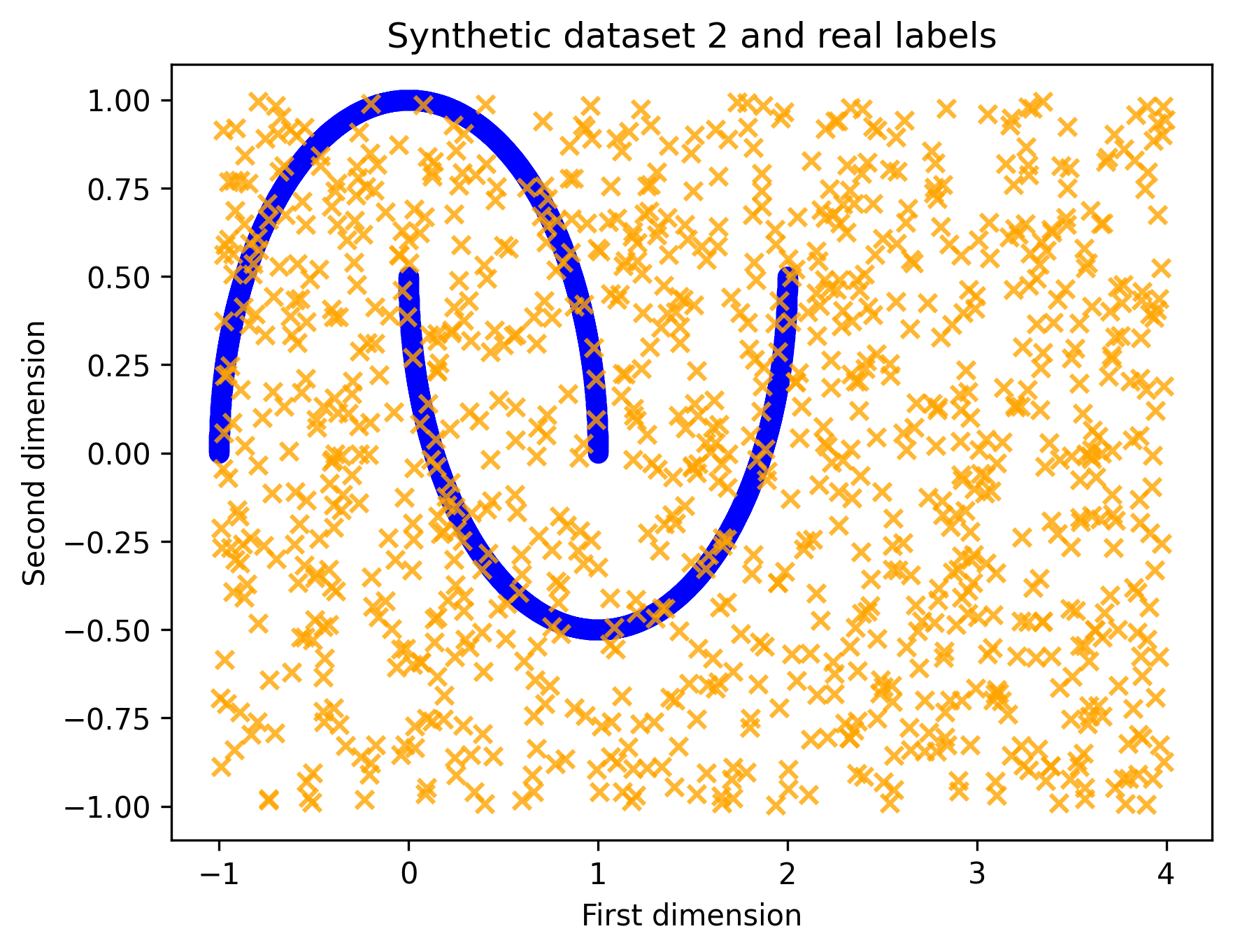}
\end{figure}

\section{Performance analysis on synthetic datasets}
\subsection{Synthetic dataset 1}
In the plots below, we can see the results of the outlier detection methods for the contamination case of $10\%$. We can see how Isolation forest misses the outliers within the 2 circles and that LOF misses the outliers outside of the circle, given that they have similar local density.

\begin{figure}[H]
\caption{BSOD on synthetic dataset 1}
\centering
\includegraphics[width=0.5\textwidth]{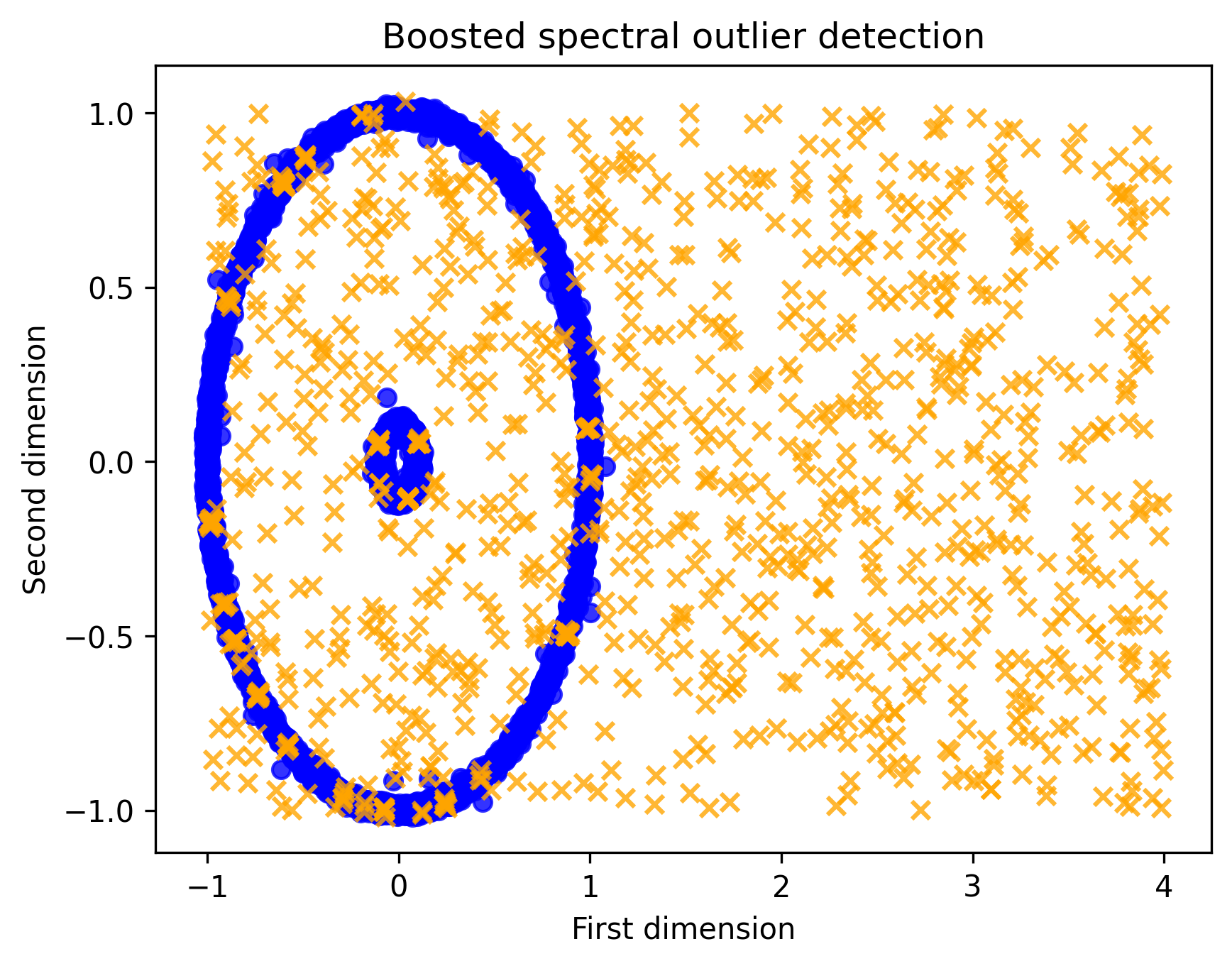}
\end{figure}

\begin{figure}[H]
\caption{Isolation Forest on synthetic dataset 1}
\centering
\includegraphics[width=0.5\textwidth]{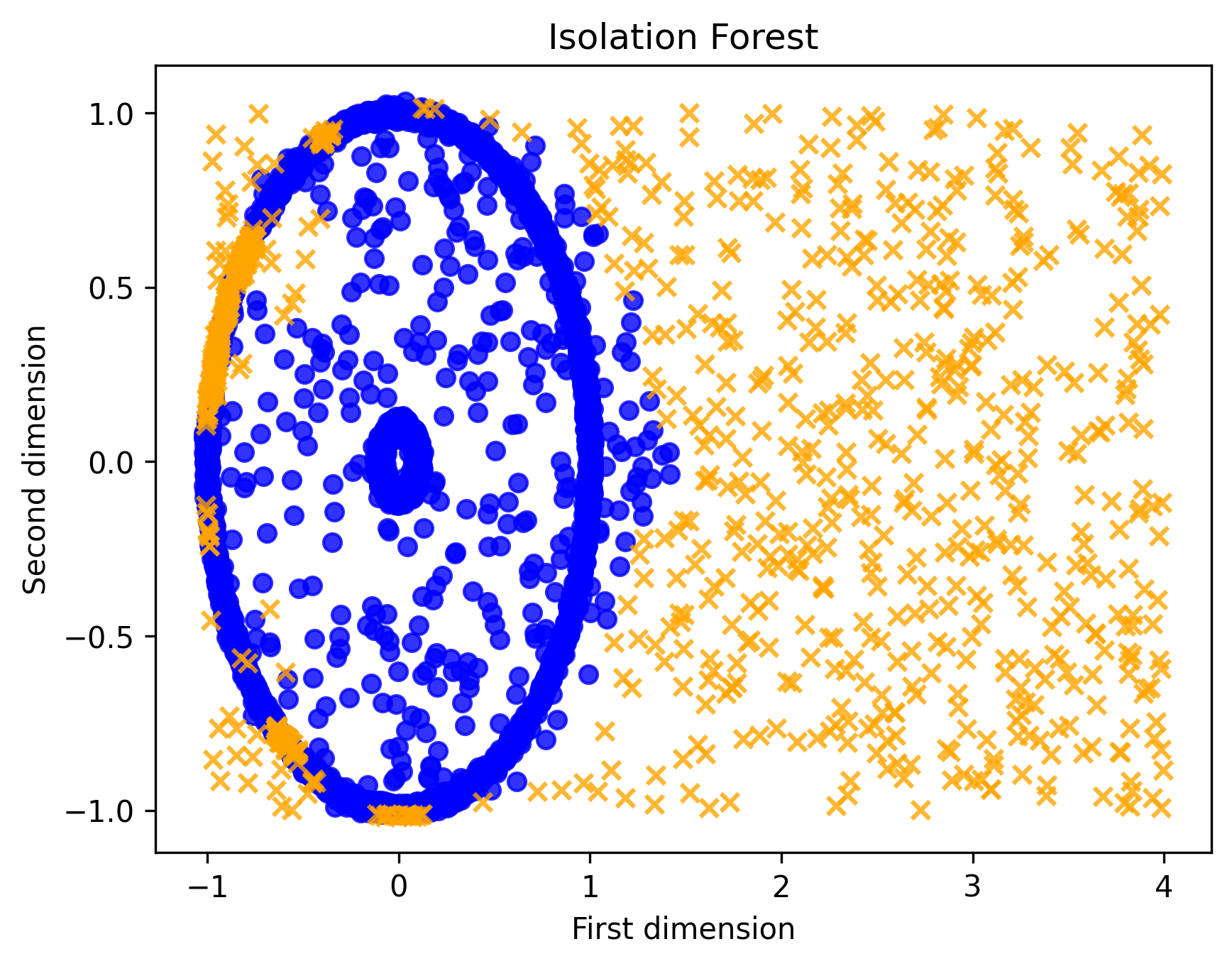}
\end{figure}

\begin{figure}[H]
\caption{Local Outlier Factor on synthetic dataset 1}
\centering
\includegraphics[width=0.5\textwidth]{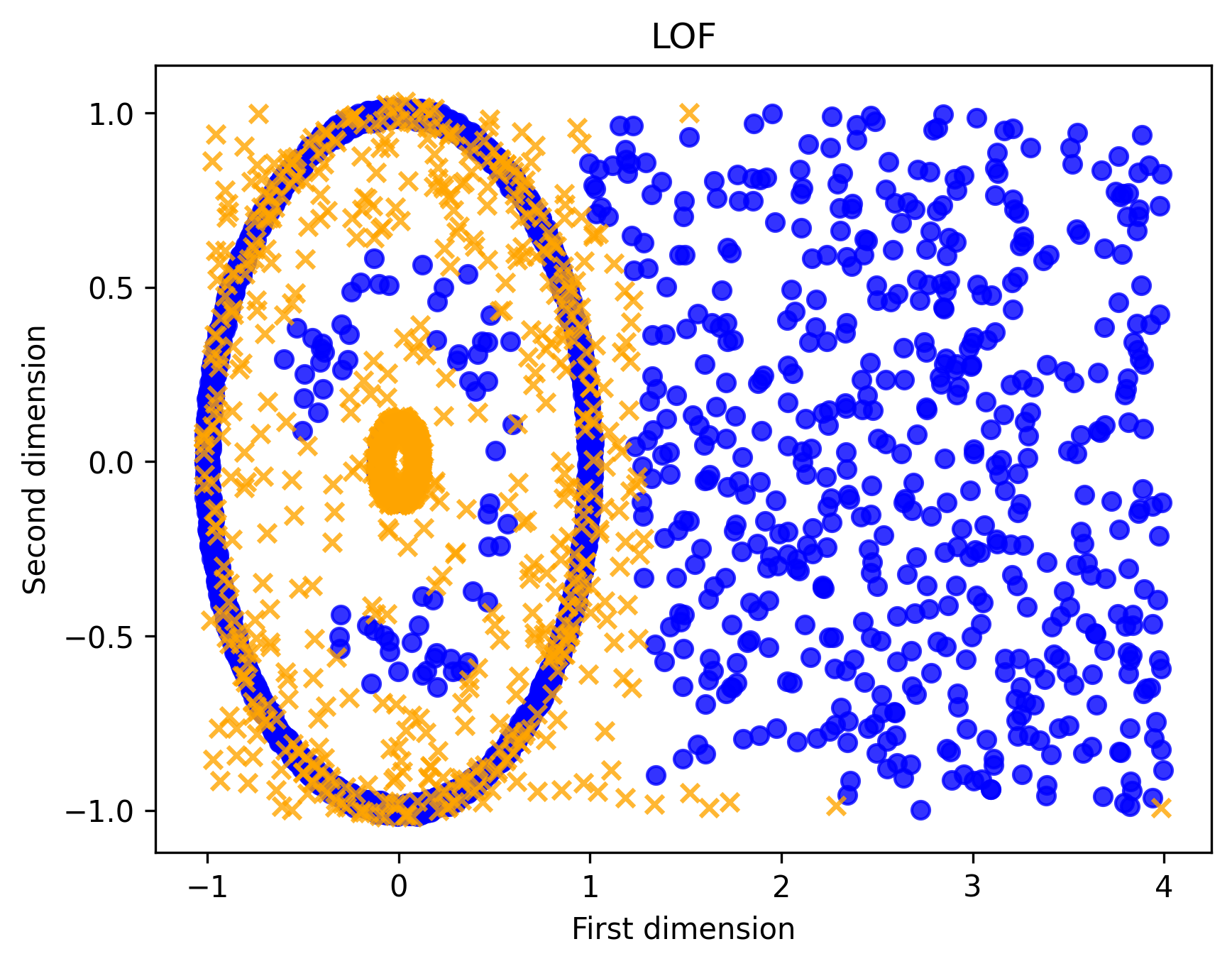}
\end{figure}

The summary, including precision and recall for different contamination levels is below

% Please add the following required packages to your document preamble:
\begin{table}[H]
\caption{Results on synthetic dataset 1}

\begin{center}
\begin{tabular}{@{}lllllllll@{}}
\toprule
     & \multicolumn{2}{l}{c = 1\%} & \multicolumn{2}{l}{c = 5\%} & \multicolumn{2}{l}{c = 10\%} & \multicolumn{2}{l}{c = 15\%} \\ \midrule
     & Precision      & Recall     & Precision      & Recall     & Precision      & Recall      & Precision      & Recall      \\
BSOD & 0.78           & 0.75       & 0.85           & 0.87       & 0.85           & 0.93        & 0.81           & 0.93        \\
IF   & 0.70           & 0.71       & 0.60           & 0.63       & 0.55           & 0.61        & 0.57           & 0.65        \\
LOF  & 0.50           & 0.51       & 0.41           & 0.43       & 0.31           & 0.34        & 0.27           & 0.31        \\ \bottomrule
\end{tabular}
\end{center}

\end{table}

\subsection{Synthetic dataset 2}
In the plots below, we can see the results of the outlier detection methods for the contamination case of $10\%$. We can see how Isolation forest misses the outliers within the 2 circles and that LOF misses the outliers outside of the circle, given that they have similar local density.

\begin{figure}[H]
\caption{BSOD on synthetic dataset 2}
\centering
\includegraphics[width=0.5\textwidth]{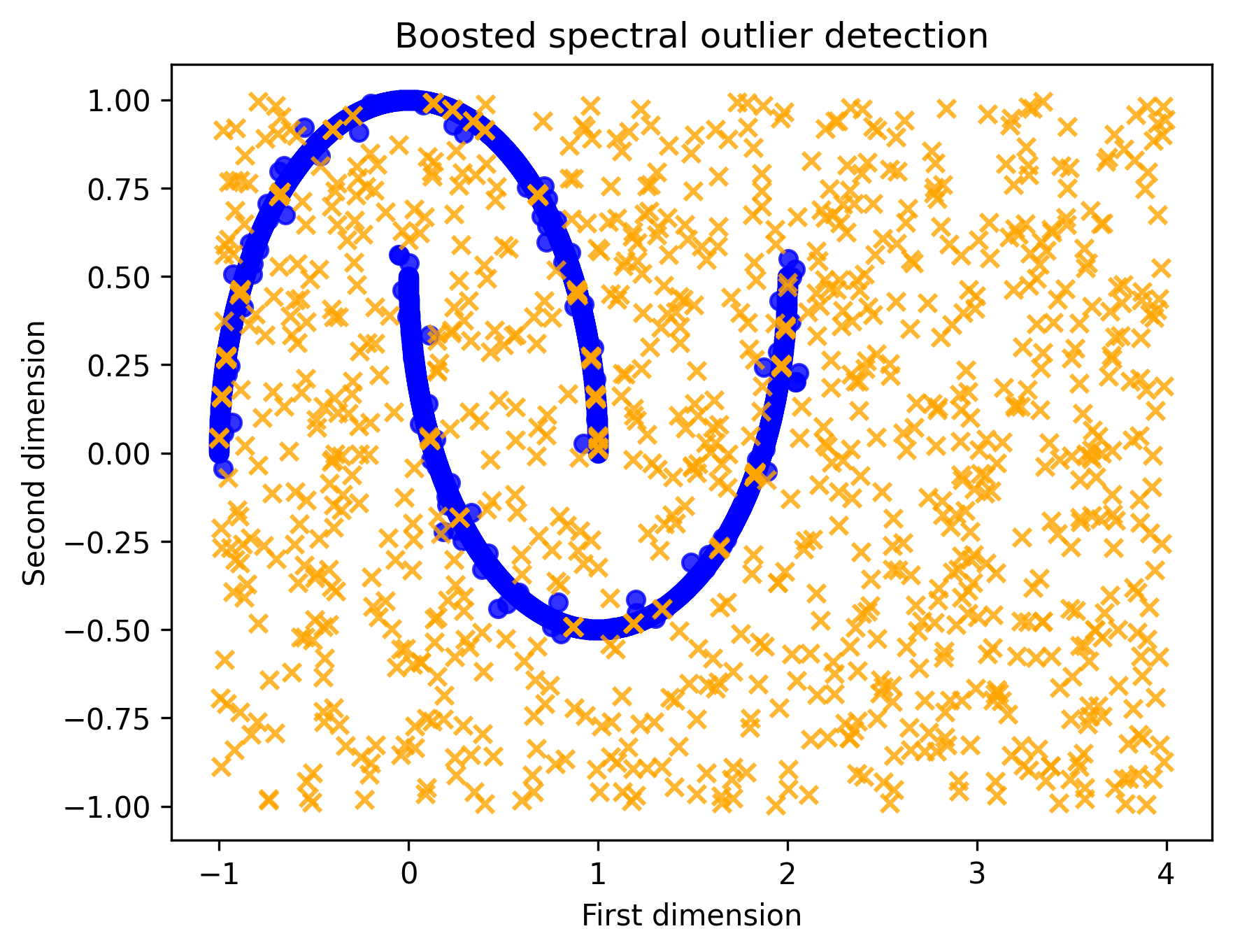}
\end{figure}

\begin{figure}[H]
\caption{Isolation Forest on synthetic dataset 2}
\centering
\includegraphics[width=0.5\textwidth]{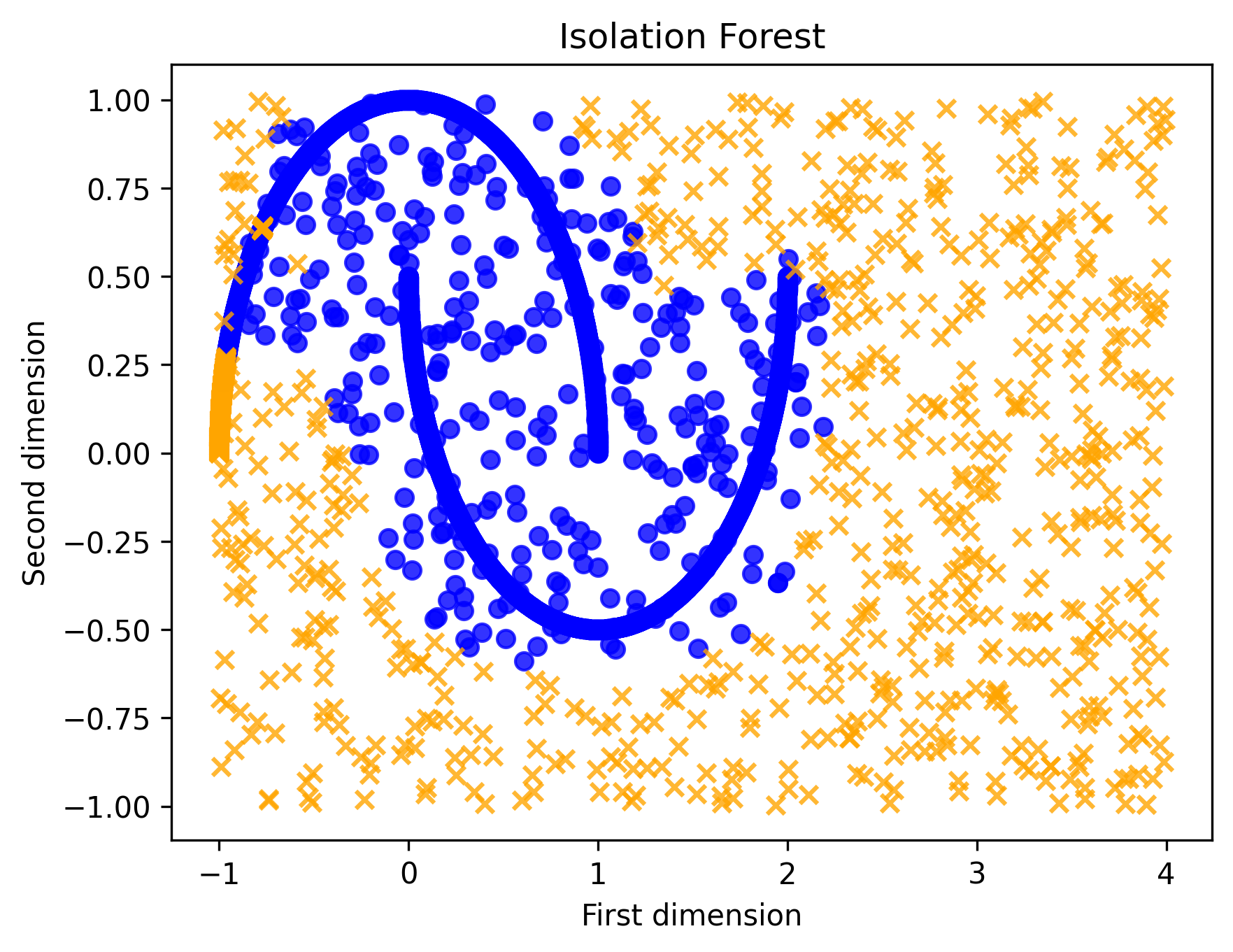}
\end{figure}

\begin{figure}[H]
\caption{Local Outlier Factor on synthetic dataset 2}
\centering
\includegraphics[width=0.5\textwidth]{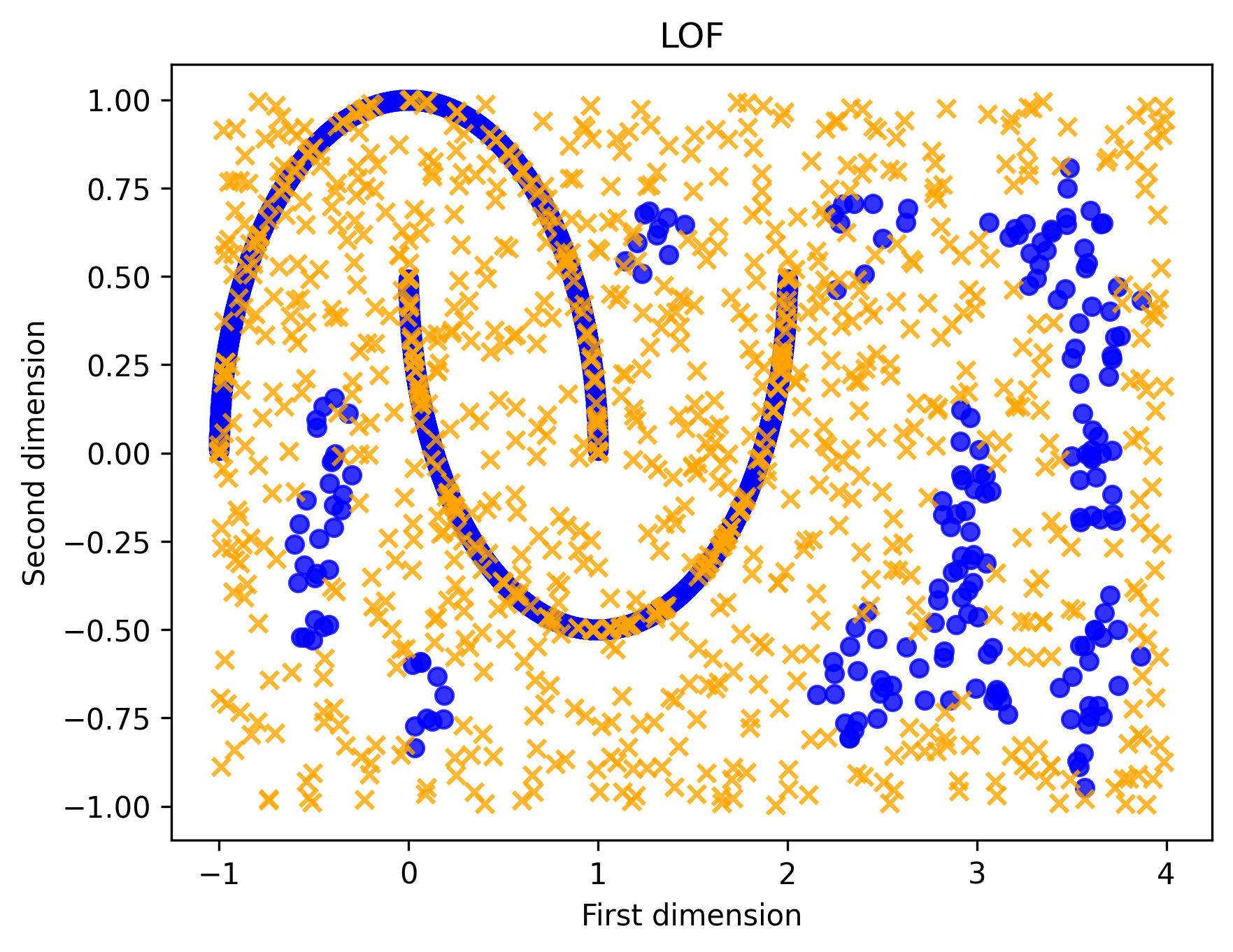}
\end{figure}

The summary, including precision and recall for different contamination levels is below

% Please add the following required packages to your document preamble:
\begin{table}[H]
\caption{Results on synthetic dataset 2}

\begin{center}
\begin{tabular}{@{}lllllllll@{}}
\toprule
     & \multicolumn{2}{l}{c = 1\%} & \multicolumn{2}{l}{c = 5\%} & \multicolumn{2}{l}{c = 10\%} & \multicolumn{2}{l}{c = 15\%} \\ \midrule
     & Precision      & Recall     & Precision      & Recall     & Precision      & Recall      & Precision      & Recall      \\
BSOD & 0.81           & 0.71       & 0.86           & 0.87       & 0.84           & 0.92        & 0.80           & 0.92        \\
IF   & 0.46           & 0.46       & 0.55           & 0.58       & 0.56           & 0.62        & 0.56           & 0.62        \\
LOF  & 0.89           & 0.90       & 0.80           & 0.84       & 0.70           & 0.78        & 0.66           & 0.76        \\ \bottomrule
\end{tabular}
\end{center}

\end{table}

\section{Conclusions}

In this paper we have introduced a new outlier detection algorithm based on the spectrum of the Laplacian matrix. Our method seems to be robust to both local and global outliers, mainly because it is based on the notion of connectivity in a graph. Our method has some similarities to spectral clustering, like the application of k-means on the transformed space but our method is based on the largest eigenvalue. The boosted application has a nice computational property, the complexity of each learner is decreasing given the fact that the sample size is progressively decreasing. This fact together with using a sparse representation of the Laplacian matrix allows the application of this method to bigger datasets compared to other methods based directly on spectral clustering. Our method obtains superior precision and recall measures for different contamination parameters in the synthetic datasets, which provides some evidence that this method can be competitive with respect to widely applied outlier detection algorithms.

\bibliographystyle{unsrt}  
\bibliography{references}  %%% Remove comment to use the external .bib file (using bibtex).
%%% and comment out the ``thebibliography'' section.

\end{document}